\documentclass[11pt]{article}

\usepackage{arxiv}

\usepackage{times}
\usepackage{latexsym}

\usepackage[T1]{fontenc}

\usepackage[utf8]{inputenc}
\usepackage{amsmath}
\usepackage{amssymb}
\usepackage{booktabs}

\usepackage{microtype}

\usepackage{inconsolata}

\usepackage{graphicx}
\usepackage{listings}
\usepackage{tcolorbox}
\usepackage{float}

\usepackage{multirow}

\usepackage{amsthm}
\usepackage{subcaption}
\newtheorem{proposition}{Proposition}

\usepackage[dvipsnames,table]{xcolor}

\usepackage{url}
\usepackage{natbib}
\newcommand{\ourmethod}{SC-SDPO}
\usepackage[colorlinks, citecolor=blue, linkcolor=blue]{hyperref}
\usepackage{cleveref}

\title{Restoring the Sweet Spot: Pass-Rate Weighted Self-Distillation for LLM Reasoning}

\author{
  \textbf{Zehao Liu}, \textbf{Yuanpu Cao}, \textbf{Jinghui Chen}, \textbf{Vasant G. Honavar} \\
  College of Information Sciences and Technology\\
  Pennsylvania State University \\
  \texttt{\{zml5418, ymc5533, jzc5917, vuh14\}@psu.edu}
}

\begin{document}
\maketitle
\begin{abstract}
Self-Distillation Policy Optimization (SDPO) provides dense token-level credit assignment for reinforcement learning with large language models by leveraging the model’s own feedback-conditioned predictions as a self-teacher.
Unlike GRPO, however, whose group-relative advantage naturally concentrates learning on a \emph{sweet spot} of intermediate-difficulty questions, SDPO’s KL-based advantage lacks an implicit notion of difficulty awareness.

We analyze this gap through the lens of GRPO’s advantage normalization. Extending the learnability framework to normalized rewards, we show that normalization absorbs the variance term $p(1-p)$, equalizing leading-order learnability across questions and leaving $\sqrt{p(1-p)}$ as the sole residual scaling factor in the per-question gradient.
This analysis yields a simple prescription: weight each question’s SDPO loss by $[\hat{p}(1-\hat{p})]^{1/2}$, resulting in \ourmethod{}, a scale-consistent variant of SDPO.

The proposed weights are obtained as a zero-cost byproduct of on-policy rollouts with batch-adaptive normalization, inducing an implicit curriculum that dynamically tracks the model’s evolving competence.
Experiments on scientific reasoning and tool-use benchmarks demonstrate that \ourmethod{} consistently improves over SDPO, yielding gains of +3.2/+4.3 (mean@16/maj@16) on Qwen3-8B and +1.8/+3.0 on OLMo-3-7B, while preserving stable training dynamics throughout optimization.
\end{abstract}

\section{Introduction}
Large language models (LLMs) are increasingly post-trained with reinforcement learning in verifiable domains such as code and mathematics~\citep{guo2025deepseek, jaech2024openai, team2025kimi, olmo2025olmo}. The dominant approach, reinforcement learning with verifiable rewards (RLVR), trains models using sparse binary outcome rewards. More recently, on-policy distillation (OPD) has emerged as a complementary paradigm that provides dense, token-level supervision from a teacher model, achieving strong performance on reasoning-intensive tasks~\citep{deepseekai2026deepseekv4, xiao2026mimo, yang2025qwen3, zeng2026glm}. Despite its effectiveness, RLVR suffers from a fundamental credit-assignment bottleneck: a single scalar reward per attempt indicates whether the model was wrong, but not where or why~\citep{hubotter2026reinforcement}.

Self-Distillation Policy Optimization \citep[SDPO;][]{hubotter2026reinforcement} offers an elegant solution to this bottleneck. By conditioning the same model on privileged information—such as runtime errors or previously discovered solutions—SDPO constructs a self-teacher that provides dense, logit-level supervision without requiring an external model. This mechanism converts tokenized feedback into fine-grained credit assignment and substantially outperforms Group Relative Policy Optimization (GRPO)~\citep{guo2025deepseek, shao2024deepseekmath}.

However, while SDPO resolves GRPO’s credit-assignment bottleneck, it also sacrifices a useful property that GRPO possesses by construction. In GRPO, the group-relative advantage naturally concentrates learning on a \emph{sweet spot} of intermediate-difficulty questions, where the model sometimes succeeds and sometimes fails. By contrast, SDPO’s KL-based distillation loss does not implicitly encode difficulty awareness: its gradient budget is largely insensitive to pass rate, diluting the learning signal precisely where self-distillation is most informative.

Prior work has formalized the role of difficulty in RLVR. \citet{bae2026online} show, under raw binary rewards, that policy improvement is lower-bounded by the reward variance $p(1-p)$. However, this analysis does not account for GRPO’s advantage normalization. We extend their framework to the normalized setting and show that normalization absorbs the $p(1-p)$ factor, equalizing leading-order learnability across questions. The residual per-question gradient magnitude instead scales as $\sqrt{p(1-p)}$ rather than $p(1-p)$ (\cref{sec:gap}).

This analysis directly motivates \ourmethod{}, which restores the sweet spot to SDPO by scaling each question’s loss contribution by $w = [\hat{p}(1-\hat{p})]^{1/2}$. The weighting automatically suppresses all-correct and all-incorrect groups while peaking at $\hat{p} = 0.5$. The weights are recomputed from the current policy’s rollouts at every training step using batch-adaptive normalization, forming an implicit curriculum that tracks the model’s evolving competence at zero additional rollout cost.

We evaluate \ourmethod{} on scientific reasoning and tool-use tasks. Our main findings are:
\begin{itemize}
\item \ourmethod{} improves over SDPO by +3.2/+4.3 (mean@16/maj@16) on Qwen3-8B and +1.8/+3.0 on OLMo-3-7B across five tasks, demonstrating that difficulty-aware weighting and dense credit assignment are complementary.
\item The $\alpha = 1/2$ exponent consistently outperforms $\alpha = 1$, supporting the scale-consistent argument.
\item Batch-adaptive weighting outperforms static single-pass schemes while incurring zero additional rollout cost.
\end{itemize}

\section{Related Work}

\paragraph{On-policy self-distillation.}
On-policy distillation trains a student model on its own generations to match a teacher's distribution~\citep{agarwal2024policy, gu2024minillm, lu2025onpolicydistillation, song2026survey}. When no external teacher is available, self-distillation methods condition the model on privileged in-context information to construct a self-teacher. Representative approaches include SDPO~\citep{hubotter2026reinforcement}, SDFT~\citep{shenfeld2026self}, OPSD~\citep{zhao2026self}, OPSDC~\citep{sang2026policy}, and G-OPD~\citep{yang2026learning}. Several works further improve sample routing within this paradigm: SCOPE~\citep{zheng2026scope} assesses signal reliability to adaptively filter noisy teacher corrections and reinforce diverse reasoning paths, HDPO~\citep{ding2026hdpo} tackles sparse rewards on excessively hard tasks in GRPO by injecting privileged information to construct high-quality distillation targets, SRPO~\citep{li2026unifying} routes samples based on correctness, applying token-level distillation to errors and GRPO to successes, TRACE~\citep{wang2026trace} introduces a token-level routing framework that adaptively splits the training objective between self-distillation and GRPO, RLSD~\citep{yang2026self} addresses privileged information leakage by decoupling the update direction (determined by environment reward) from the update magnitude (modulated by a per-token teacher-student likelihood ratio), and SD-ZERO~\citep{he2026self} takes a two-phase approach, first training the model to revise its own outputs conditioned on binary rewards, then distilling the revision capability back into the generator via on-policy KL minimization. 
% These methods modulate the self-distillation signal at the \emph{token} or \emph{sample} level; none weights questions by the model's current solve rate to concentrate training on the difficulty sweet spot.

\paragraph{Difficulty-aware training.}
DAPO~\citep{yu2026dapo} discards homogeneous rollout groups in RLVR, and \citet{bae2026online} provide a theoretical foundation under raw binary rewards by showing that policy improvement is lower-bounded by the reward variance $p(1-p)$. GRPO-LEAD~\citep{zhang2025grpo} and MathForge~\citep{dai2026harder} reweight advantages by the difficulty within the RLVR framework.  In the on-policy distillation setting, PACED~\citep{xu2026paced} derives a Beta-kernel weight $w(p) = p^\alpha(1-p)^\beta$ from the signal-to-noise ratio structure of distillation gradients, and demonstrates its effectiveness in both teacher-forced distillation and on-policy self-distillation. Our work differs from PACED in two key respects: we adopt $\alpha = \beta = 1/2$ rather than $\alpha = \beta = 1$, justified by \cref{sec:gap}; and we compute weights on-policy at every step as a free byproduct of the existing rollout loop, whereas PACED estimates pass rates in a separate offline rollout pass.

\section{Preliminaries}
\label{sec:preliminary}
\subsection{Self-Distillation Policy Optimization}
We build upon Self-Distillation Policy Optimization \citep[SDPO;][]{hubotter2026reinforcement}, which leverages the in-context learning ability of a language model for credit assignment in reinforcement learning. We briefly review its key components below.

\paragraph{Self-Teacher.}
Given a question $x$, the policy $\pi_\theta$ generates a response $y \sim \pi_\theta(\cdot \mid x)$ and receives tokenized feedback $f$ from the environment (e.g., correct solutions). SDPO introduces a \textit{self-teacher} $\pi_\theta(\cdot \mid x, f)$, which is the same policy additionally prompted with the feedback $f$. Since the self-teacher observes this extra information in context, it can retrospectively identify mistakes in the student's original attempt $y$ and adjust its next-token predictions accordingly. Crucially, compared with On-Policy Distillation\citep[OPD;][]{agarwal2024policy}, this mechanism requires no external teacher model.

\paragraph{Training Objective.}
SDPO minimizes a KL-divergence-based distillation loss between the student and the self-teacher at each token position $t$:
\begin{equation}
    \mathcal{L}_{\text{SDPO}}(\theta) :=  \sum_{t} \mathrm{KL}\big(\pi_\theta(\cdot \mid x, y_{<t}) 
     \parallel \mathrm{sg}(\pi_\theta(\cdot \mid x, f, y_{<t}))\big),
\label{eq:sdpo_loss}
\end{equation}
where $\text{sg}(\cdot)$ denotes the stop-gradient operator, which prevents gradients from flowing through the teacher.

\paragraph{Dense Credit Assignment.}
The gradient of $\mathcal{L}_{\text{SDPO}}$ can be interpreted as a policy gradient with per-token, per-logit advantages. Under the reverse KL formulation, the advantage takes the form:
\begin{equation}
    A^{\text{SDPO}}_{i,t}(\hat{y}_{i,t}) = \log \frac{\pi_\theta(\hat{y}_{i,t} \mid x, f_i, y_{i,<t})}{\pi_\theta(\hat{y}_{i,t} \mid x, y_{i,<t})},
    \label{eq:sdpo_advantage}
\end{equation}
where $\hat{y}_{i,t}$ ranges over the vocabulary at position $t$. This contrasts with GRPO \citep{shao2024deepseekmath, guo2025deepseek}, whose advantage is constant across all tokens within a rollout. In practice, the KL divergence is approximated by retaining only the top-$K$ logits of the student distribution to reduce memory overhead. Additionally, the symmetric Jensen--Shannon divergence can replace the KL divergence in Eq.~\eqref{eq:sdpo_loss} to further stabilize training. When the symmetric Jensen--Shannon divergence replaces the KL in Eq.~\eqref{eq:sdpo_loss}, the per-token advantage is given as:
\begin{equation}
    A_{\mathrm{JSD}}(\hat{y}_{i,t}) = \frac{1}{2} \log \frac{\pi_\theta(\hat{y}_{i,t})}{M(\hat{y}_{i,t})},
\end{equation}
where $M(\hat{y}_{i,t}) = \frac{1}{2}(\pi_\theta(\hat{y}_{i,t}) + \pi_\theta(\hat{y}_{i,t}|f))$ is the mixture. The derivation is provided in \cref{app:jsd_advantage}.

\subsection{GRPO and the Reward Sweet Spot}
In GRPO~\citep{guo2025deepseek, shao2024deepseekmath}, for each question $x$, a group of $G$ rollouts $\{y_i\}_{i=1}^G$ is sampled from the current policy, and each receives a binary reward $r_i \in \{0, 1\}$. The per-rollout estimated advantage is computed by normalizing the reward relative to the group: 
\begin{equation}
    \hat{A}_i^{\text{GRPO}} = \frac{r_i - \bar{r}}{\mathrm{std}(\{r_i\})},
    \label{eq:grpo_adv}
\end{equation}
where $\bar{r} = \frac{1}{G}\sum_{i=1}^G r_i$ is the group mean. When all rollouts succeed ($\bar{r}=1$) or all fail ($\bar{r}=0$), the advantages collapse to zero and no learning occurs. 
\citet{dai2026harder} formalize the total update magnitude for a single question as (Theorem 1):
\begin{equation}
\sum_{i=1}^{G}|\hat{A}_{i}^{\text{GRPO}}| 
= \sum_{i=1}^{G}\left|\frac{r_i - \bar{r}}{\mathrm{std}(\{r_i\})}\right| = 2G\sqrt{p(1-p)}.
\label{eq:sweet_spot_magnitude}
\end{equation}
\citet{bae2026online} formalize the learnability bound under the
raw (unnormalized) binary reward $r_{\mathrm{acc}}
\in \{0,1\}$, proving that the expected policy improvement
$J(\theta') - J(\theta)$ is lower-bounded by the within-group reward variance:
\begin{equation}
    J(\theta') - J(\theta)
    \;\geq\; \frac{1}{2\beta^2} \cdot \mathrm{Var}[r_{\mathrm{acc}}]
    = \frac{1}{2\beta^2} \cdot p(1-p),
    \label{eq:reward_variance}
\end{equation}
where $\beta > 0$ is the KL regularization temperature. Since $p(1-p)$ is a concave
quadratic that attains its unique maximum at $p = 0.5$,
% \begin{equation}
%     \arg\max_{p \in [0,1]}\; p(1-p) = 0.5,
%     \label{eq:sweet_spot}
% \end{equation}
the learning signal is strongest when roughly half the rollouts are correct, defining a \textit{reward sweet spot} at intermediate difficulty. We note that this conclusion rests on the raw reward prior to any normalization. As we show in \cref{sec:gap}, GRPO's group-relative normalization equalizes the leading-order learnability across all non-degenerate questions, broadening the sweet spot from a single point at $p = 0.5$ to the entire interval $p \in (0, 1)$; the residual $p$-dependence in the per-step gradient magnitude is $\sqrt{p(1-p)}$
rather than $p(1-p)$.

\section{Gap Between SDPO and GRPO}
\label{sec:gap}
 
The two frameworks offer complementary strengths that can be seen by comparing their per-question gradient contributions.
Let $\{x_j\}_{j=1}^{M}$ be the questions in a training batch, with $\mathcal{G}_j$ denoting the set of rollouts sampled from $\pi_\theta$ for question $x_j$. 
Both GRPO and SDPO can be expressed in a unified policy gradient form:
\begin{equation}
    \nabla_\theta \mathcal{L}_j = \sum_{i \in \mathcal{G}_j} \sum_{t} A_{i,t} \cdot \nabla_\theta \log \pi_\theta(y_{i,t} \mid x_j, y_{i,<t}).
    \label{eq:unified_pg}
\end{equation}
We decompose the advantage into two complementary factors---a \emph{question-level} component $A_j^{\text{q}}$ reflecting difficulty awareness, and a \emph{token-level} component $A_{i,t}^{\text{tok}}$ providing dense credit assignment:
\begin{equation}
    A_{i,t} = A_j^{\text{q}} \cdot A_{i,t}^{\text{tok}}.
    \label{eq:advantage_decompose}
\end{equation}
Under this decomposition, the two methods each capture only one factor:
\begin{align}
    \text{GRPO:} \quad &A_j^{\text{q}} = g(\hat{p}_j), \quad A_{i,t}^{\text{tok}} = 1, \label{eq:grpo_decompose} \\
    \text{SDPO:} \quad & A_j^{\text{q}} = 1, \quad A_{i,t}^{\text{tok}} = \log \frac{\pi_\theta(\hat{y}_{i,t} \mid f_i)}{\pi_\theta(\hat{y}_{i,t})}. \label{eq:sdpo_decompose}
\end{align}
where $g(\hat{p}_j)$ denotes the effective question-level signal strength of GRPO, and we suppress the shared conditioning on $(x, y_{i,<t})$ in $A_{i,t}^{\text{tok}}$ for brevity.
\paragraph{What each method lacks.} 
GRPO's group-relative advantage $\frac{r_i - \bar{r}}{\mathrm{std}(\{r_i\})}$ implicitly modulates the question-level signal---its aggregate magnitude scales as $\sqrt{\hat{p}_j(1-\hat{p}_j)}$, peaking at $\hat{p}_j = 0.5$ and vanishing for all-correct or all-incorrect groups---but assigns a uniform advantage across all tokens within a rollout ($A_{i,t}^{\text{tok}} = 1$). 
SDPO provides dense, per-token credit assignment through the self-teacher log-ratio but treats all questions equally ($A_j^{\text{q}} = 1$), losing the difficulty-aware modulation that GRPO possesses by construction. 
We verify this empirically in Figure \ref{fig:kl_analysis} (Appendix): the per-token SDPO advantage magnitude remains largely constant across pass rates, confirming that SDPO's distillation loss does not implicitly recover question-level signal modulation.

\paragraph{Why the ideal scales as $\sqrt{p(1-p)}$, not $p(1-p)$.}

The variance-based scaling $p(1-p)$ originates from the analysis of \citet{bae2026online}, who derive their lower bound (Eq.~\ref{eq:reward_variance}) under the raw binary reward $r_{\mathrm{acc}} \in \{0,1\}$ without accounting for advantage normalization. However, GRPO does not use the raw reward as its advantage;
instead, it applies group-relative normalization (Eq.~\ref{eq:grpo_adv}), yielding the standardized advantage $\hat{A}_i = (r_i - \bar{r}) / \mathrm{std} (\{r_i\})$.
 
We extend the theoretical framework of \citet{bae2026online} to this normalized setting. Specifically, we apply their cumulant generating function (CGF) analysis
(Proposition~6.1 in \citealt{bae2026online}) to the standardized reward
$\tilde{r} := (r - p)/\sqrt{p(1-p)}$.
The main result (Proposition~\ref{prop:normalized_cgf} in \cref{sec:normalized_bound}) shows that the leading term of the learnability bound becomes a constant $1/(2\beta^2)$ independent of~$p$:
\begin{equation}
    D_{\mathrm{KL}}\!\left(\pi_{\mathrm{init}} \,\|\,
    \tilde{\pi}^*\right)
    = \frac{1}{2\beta^2}
    + O(\beta^{-3}),
    \label{eq:normalized_kl}
\end{equation}
where $\tilde{\pi}^*$ is the optimal policy under the normalized reward. The normalization absorbs the $p(1-p)$ factor from the raw-reward bound, equalizing leading-order learnability across all non-degenerate questions. The only remaining $p$-dependence is GRPO's per-question gradient magnitude $\sqrt{p(1-p)}$ (Eq.~\ref{eq:sweet_spot_magnitude}), which the ideal objective adopts as its question-level factor.

\paragraph{An ideal objective.}
Given that the natural question-level scaling is $\sqrt{p(1-p)}$, the decomposition in Eq.~\eqref{eq:advantage_decompose} suggests a natural ideal objective that combines the strengths of both frameworks: dense token-level credit assignment from SDPO together with GRPO's question-level difficulty modulation.
Concretely, the ideal per-question gradient takes the form:
\begin{equation}
    \nabla_\theta \mathcal{L}_j^{\,\mathrm{ideal}}
    = \underbrace{[\hat{p}_j(1-\hat{p}_j)]^{1/2}}_
      {A_j^{q,\star}:\;\text{question-level}}
    \;\cdot
    \underbrace{\sum_{i \in \mathcal{G}_j} \sum_t
      A_{i,t}^{\mathrm{SDPO}} \cdot
      \nabla_\theta \log \pi_\theta
      (y_{i,t} \mid x_j, y_{i,<t})}_
      {\text{token-level (SDPO)}},
    \label{eq:ideal}
\end{equation}
where the question-level factor $A_j^{q,\star} =
[\hat{p}_j(1-\hat{p}_j)]^{1/2}$ reproduces the natural
scaling that GRPO obtains implicitly through group-relative
advantage normalization.

\section{Method}
\label{sec:method}
The ideal objective (Eq.~\ref{eq:ideal}) prescribes a specific question-level factor but leaves open how to implement it within SDPO's existing training loop. We show that a single scalar modification suffices.
As shown in Eq.~\eqref{eq:advantage_decompose}, the advantage decomposes into a question-level factor $A_j^{\text{q}}$ and a token-level factor $A_{i,t}^{\text{tok}}$. We propose to set $A_j^{\text{q}} = \textcolor{RoyalBlue}{\bar{w}_j}$, a per-question weight derived from on-policy pass rates, yielding the Scale-Consistent SDPO (\ourmethod{}) gradient:
\begin{equation}
    \nabla_\theta \mathcal{L}_j = \sum_{i \in \mathcal{G}_j} \sum_{t} \textcolor{RoyalBlue}{\bar{w}_j} \cdot A_{i,t}^{\text{SDPO}} \cdot \nabla_\theta \log \pi_\theta(y_{i,t} \mid x_j, y_{i,<t}).
    \label{eq:wsdpo_grad}
\end{equation}
The modification is minimal---a single scalar multiplier per question---yet it restores the question-level signal modulation that SDPO lacks. We detail the construction of $\textcolor{RoyalBlue}{\bar{w}_j}$ in two parts: the weighting function (\cref{sec:weighted_obj}), and the batch-adaptive normalization (\cref{sec:normalization}).
 
\subsection{Weighted SDPO Objective}
\label{sec:weighted_obj}
 
Of the $|\mathcal{G}_j|$ rollouts, let $k_j$ succeed, giving an empirical pass rate $\hat{p}_j = k_j / |\mathcal{G}_j|$. The \ourmethod{} objective is:
\begin{equation}
    \mathcal{L}_{\text{\ourmethod}}(\theta) :=  \sum_{j=1}^{M} \textcolor{RoyalBlue}{\bar{w}_j} \sum_{i \in \mathcal{G}_j} \sum_{t} \mathrm{KL}\big( \pi_\theta(\cdot \mid x_j, y_{i,<t}) \big\|\, \mathrm{sg}\big(\pi_\theta(\cdot \mid x_j, f_i, y_{i,<t})\big) \big),
\label{eq:weighted_sdpo}
\end{equation}
where $\textcolor{RoyalBlue}{\bar{w}_j}$ is a normalized, question-level weight derived from $\hat{p}_j$. The inner two sums are identical to the original SDPO loss; the only change is the outer multiplication by $\textcolor{RoyalBlue}{\bar{w}_j}$.
 
The unnormalized weight takes the form:
\begin{equation}
    w_j = \bigl[\hat{p}_j(1 - \hat{p}_j)\bigr]^{\alpha}, \quad \alpha > 0.
    \label{eq:raw_weight}
\end{equation}
The term $\hat{p}(1-\hat{p})$ is the Bernoulli variance of the rollout outcomes and equals zero when $\hat{p}\in\{0,1\}$, reaching its maximum at $\hat{p}=0.5$. Questions where the policy sometimes succeeds and sometimes fails---the regime where contrastive self-distillation is most informative---receive the largest weight. The exponent $\alpha$ controls how sharply the weight concentrates around this sweet spot. We set $\alpha = 1/2$ by default, as justified by the 
scale-consistency analysis in \cref{sec:gap}.

\subsection{Batch-Adaptive Normalization}
\label{sec:normalization}
 
As the policy evolves and more questions migrate to the extremes ($\hat{p} \to 1$), the number of non-degenerate questions $|\mathcal{S}|$ and the raw weight $w_j$ sum both change, potentially destabilizing the effective learning rate. 
We apply a simple within-batch normalization. Let $\mathcal{S} = \{j : w_j > 0\}$ be the set of questions with non-degenerate pass rates ($\hat{p}_j \notin \{0,1\}$). The normalized weight is:
\begin{equation}
    \textcolor{RoyalBlue}{\bar{w}_j} = 
    \begin{cases}
        \displaystyle \frac{w_j}{\frac{1}{|\mathcal{S}|}\sum_{l \in \mathcal{S}} w_l}, & \text{if } j \in \mathcal{S}, \\[6pt]
        0, & \text{otherwise}.
    \end{cases}
    \label{eq:normalization}
\end{equation}
This ensures that the mean weight over informative questions is always 1, preventing the effective learning rate from drifting as the batch composition changes during training. It does not affect the relative weight distribution and hence does not alter the scale-consistency to the ideal objective.
 
Because SDPO is an on-policy algorithm that already samples $G$ rollouts per question and evaluates them at every training step, the pass rate $\hat{p}_j$ is a \emph{free byproduct} of the existing pipeline. Computing $w_j$ and $\textcolor{RoyalBlue}{\bar{w}_j}$ adds $\mathcal{O}(M)$ scalar operations---negligible compared to the forward and backward passes. Unlike single-pass estimation methods such as PACED~\citep{xu2026paced}, which compute pass rates once before training and freeze them, our weights are recomputed at every step and therefore automatically track the evolving policy: a previously unsolvable question that enters the sweet spot as training progresses will receive increasing weight, while mastered questions are naturally phased out. This yields an implicit curriculum without manual scheduling.
 
\paragraph{Summary.}
\ourmethod{} modifies SDPO by a single per-question scalar multiplication. 
The weight $\textcolor{RoyalBlue}{\bar{w}_j}$ restores the question-level gradient modulation that GRPO obtains implicitly, 
concentrating training on the difficulty sweet spot while preserving SDPO's dense, per-token credit assignment. 
The implementation is shown in Figure~\ref{fig:code}.

\begin{figure}[t]
\centering
\begin{minipage}{0.47\textwidth}
\begin{lstlisting}[
  language=Python,
  basicstyle=\ttfamily\small,
  keywordstyle=\color{RoyalBlue}\bfseries,
  commentstyle=\color{gray},
  stringstyle=\color{OliveGreen},
  columns=fullflexible,
  frame=single,
  xleftmargin=6pt,
  xrightmargin=6pt,
  aboveskip=6pt,
  belowskip=6pt,
]
# --- SC-SDPO: 6-line modification to SDPO ---
# Input: rewards (B,), uids (B,), alpha=0.5
 
# Step 1: per-question pass rate
for uid in unique(uids):
    p[uid] = rewards[uids == uid].mean()
 
# Step 2: std-scaling weight
w_raw = {uid: (p * (1 - p)) ** alpha 
     for uid, p in p.items()}
 
# Step 3: batch-adaptive normalization
w_nz = [w for w in w_raw.values() if w > 0]
w = {uid: w / mean(w_nz) if w > 0 else 0
     for uid, w in w_raw.items()}
 
# Step 4: apply to SDPO loss
per_token_loss = per_token_loss * w[uids] 
\end{lstlisting}
\end{minipage}
\caption{Implementation of \ourmethod{}. The entire modification reduces to computing a per-question scalar weight from on-policy rollouts (Steps 1--3) and a single element-wise multiplication on the existing SDPO loss (Step 4). No architectural changes, additional rollouts, or hyperparameter tuning beyond the default $\alpha = 0.5$ are required.}
\label{fig:code}
\end{figure}

\section{Experiment}
\subsection{Experimental Setting}
\label{sec:exp_setting}

\paragraph{Models and training.}
Following \citet{hubotter2026reinforcement}, we conduct experiments on Qwen3-8B~\citep{yang2025qwen3} and OLmo3-7B-Instruct~\citep{olmo2025olmo}, both in non-thinking mode. Qwen3-8B serves as our primary model; OLmo3-7B-Instruct
is used to verify generalization across model families.
For OLMo-3-7B, we report only the core comparison
between SDPO and \ourmethod{}
(\(\alpha\!=\!0.5\)) due to computational constraints;
the full ablation suite (PACED, Hard Filter, SRPO,
\(\alpha\!=\!1\)) is conducted exclusively on Qwen3-8B.
All runs are trained for 200 steps on 2 NVIDIA H200 GPUs using the verl framework~\citep{sheng2024hybridflow}. We report results in terms of training steps rather than wall-clock time (as in \citealp{hubotter2026reinforcement, li2026unifying}), since step-based evaluation is independent of hardware configuration and ensures reproducibility across different compute environments.
\paragraph{Data.}
We evaluate on two task families for which neither model has been explicitly fine-tuned: (1) \textbf{Science Q\&A} (Chemistry, Physics, Biology, Materials Science), using the undergraduate-level reasoning subsets (L3) from SciKnowEval~\citep{feng2024sciknoweval}; and (2) \textbf{Tool Use}, mapping tool-API specifications and user requests to correct tool calls, using ToolAlpaca~\citep{tang2023toolalpaca}. We adopt the same train-test splits as \citet{hubotter2026reinforcement}. Prompt templates are provided in \cref{app:prompt}.

\begin{table*}[!t]
\centering
\resizebox{0.98\textwidth}{!}{
\begin{tabular}{l|cc|cc|cc|cc|cc|cc}
\toprule
\rowcolor[HTML]{E9F3FE}  & \multicolumn{2}{c|}{\textbf{Chem.}} & \multicolumn{2}{c|}{\textbf{Phys.}} & \multicolumn{2}{c|}{\textbf{Bio.}} & \multicolumn{2}{c|}{\textbf{Mat.}} & \multicolumn{2}{c|}{\textbf{Tool Use}} & \multicolumn{2}{c}{\textbf{Avg.}} \\ 
\cline{2-13}
\rowcolor[HTML]{E9F3FE} \multirow{-2}{*}{\textbf{Method}} & mean & maj & mean & maj & mean & maj & mean & maj & mean & maj & mean & maj \\
\midrule \midrule
Qwen3-8B & 42.4 & 35.7 & 57.7 & 53.8 & 28.7 & 14.0 & 59.7 & 59.6 & 58.3 & 57.4 & 49.4 & 44.1 \\
\midrule
+ GRPO & 68.5 & 70.9 & 64.7 & 68.3 & 58.9 & 60.0 & 69.8 & 69.9 & \underline{67.3} & 67.6 & 65.8 & 67.3 \\
+ GRPO w/o Norm & 63.9 & 67.0 & 69.1 & 76.5 & 53.2 & 53.9 & 73.3 & 74.5 & 63.5 & 63.4 & 64.6 & 67.1 \\
+ SDPO & 78.7 & 79.2 & 75.8 & 78.5 & \underline{60.4} & \underline{61.9} & \underline{78.0} & \underline{78.6} & 65.1 & 64.8 & 71.6 & 72.6 \\
+ SDPO w/ PACED & 74.7 & 75.5 & 69.2 & 71.7 & 57.0 & 59.0 & 75.4 & 75.8 & 64.5 & 65.7 & 68.2 & 69.5 \\
+ SDPO w/ Hard Filter & 80.0 & 80.4 & \underline{76.4} & \underline{79.6} & 56.0 & 56.4 & 77.3 & 78.1 & 63.8 & 64.8 & 70.7 & 71.9 \\
+ SRPO & \underline{80.5} & \textbf{81.6} & 73.4 & 76.3 & 56.0 & 60.8 & 77.2 & \textbf{80.3} & \textbf{71.8} & \textbf{73.3} & \underline{71.8} & \underline{74.5} \\
+ \ourmethod{} ($\alpha$=1) & 80.0 & 79.5 & 75.4 & 77.3 & 58.1 & 56.3 & 77.1 & 78.3 & 66.2 & 66.9 & 71.3 & 71.7 \\
\midrule
\cellcolor[HTML]{DAE0FB}\textbf{+ \ourmethod{} ($\alpha$=0.5)} & \cellcolor[HTML]{DAE0FB}\textbf{80.6} & \cellcolor[HTML]{DAE0FB}\underline{81.3} & \cellcolor[HTML]{DAE0FB}\textbf{81.6} & \cellcolor[HTML]{DAE0FB}\textbf{84.9} & \cellcolor[HTML]{DAE0FB}\textbf{65.4} & \cellcolor[HTML]{DAE0FB}\textbf{68.5} & \cellcolor[HTML]{DAE0FB}\textbf{79.3} & \cellcolor[HTML]{DAE0FB}\textbf{80.3} & \cellcolor[HTML]{DAE0FB}\underline{67.3} & \cellcolor[HTML]{DAE0FB}\underline{69.3} & \cellcolor[HTML]{DAE0FB}\textbf{74.8} & \cellcolor[HTML]{DAE0FB}\textbf{76.9} \\
\bottomrule
\end{tabular}}
\caption{Qwen3-8B results comparing mean@16 and maj@16 (\%). \textbf{Bold} and \underline{underlined} values indicate the best and second-best results, respectively.}
\label{tab:merged_results16}
% \end{table*}

% \begin{table*}[!t]
\vspace{0.5em}
\centering
\resizebox{0.98\textwidth}{!}{
\begin{tabular}{l|cc|cc|cc|cc|cc|cc}
\toprule
\rowcolor[HTML]{E9F3FE}  & \multicolumn{2}{c|}{\textbf{Chem.}} & \multicolumn{2}{c|}{\textbf{Phys.}} & \multicolumn{2}{c|}{\textbf{Bio.}} & \multicolumn{2}{c|}{\textbf{Mat.}} & \multicolumn{2}{c|}{\textbf{Tool Use}} & \multicolumn{2}{c}{\textbf{Avg.}} \\ 
\cline{2-13}
\rowcolor[HTML]{E9F3FE} \multirow{-2}{*}{\textbf{Method}} & mean & maj & mean & maj & mean & maj & mean & maj & mean & maj & mean & maj \\
\midrule \midrule
OLMo-3-7B & 22.1 & 9.5 & 33.3 & 30.0 & 13.4 & 8.0 & 36.6 & 30.9 & 43.4 & 45.6 & 29.7 & 24.8 \\
\midrule
+ SDPO & \underline{81.1} & \underline{81.4} & \underline{68.2} & \underline{68.7} & \underline{52.0} & \underline{52.0} & \underline{78.0} & \underline{80.0} & \underline{62.1} & \underline{65.0} & \underline{68.3} & \underline{69.4} \\
\midrule
\cellcolor[HTML]{DAE0FB}\textbf{+ \ourmethod{} ($\alpha$=0.5)} & \cellcolor[HTML]{DAE0FB}\textbf{81.5} & \cellcolor[HTML]{DAE0FB}\textbf{82.5} & \cellcolor[HTML]{DAE0FB}\textbf{71.1} & \cellcolor[HTML]{DAE0FB}\textbf{73.4} & \cellcolor[HTML]{DAE0FB}\textbf{55.9} & \cellcolor[HTML]{DAE0FB}\textbf{58.2} & \cellcolor[HTML]{DAE0FB}\textbf{79.1} & \cellcolor[HTML]{DAE0FB}\textbf{80.7} & \cellcolor[HTML]{DAE0FB}\textbf{62.9} & \cellcolor[HTML]{DAE0FB}\textbf{67.3} & \cellcolor[HTML]{DAE0FB}\textbf{70.1} & \cellcolor[HTML]{DAE0FB}\textbf{72.4} \\
\bottomrule
\end{tabular}}
\caption{OLMo-3-7B results comparing mean@16 and maj@16 (\%). \textbf{Bold} and \underline{underlined} values indicate the best and second-best results, respectively.}
\label{tab:olmo_results16}
\end{table*}

\paragraph{Methods.}
We compare 8 configurations:
(1) \textbf{GRPO}: the standard group-relative policy optimization baseline, using the same optimization setup as \citet{hubotter2026reinforcement};
(2) \textbf{GRPO w/o Norm}: GRPO without the standard-deviation normalization in the advantage, using raw centered rewards $r_i - \bar{r}$ directly;
(3) \textbf{SDPO}: the unweighted baseline;
(4) \textbf{SDPO w/ PACED}: pass-rate weights $w_j = \hat{p}_j(1-\hat{p}_j)$ estimated in a single offline rollout pass before training and frozen throughout, following \citet{xu2026paced};
(5) \textbf{SDPO w/ Hard Filter}: a binary filtering strategy that retains only questions with $0.2 \leq \hat{p}_j \leq 0.8$ and discards the rest, recomputed on-policy at each step;
(6) \textbf{SRPO}~\citep{li2026unifying}: uses the sample-level routing and dynamic-weighted SDPO mechanisms proposed in the original paper with default hyperparameters; all other settings (learning rate, batch size, number of rollouts, etc.) are kept consistent with SDPO and the other methods in this work;
(7) \textbf{\ourmethod{} ($\alpha{=}1$)}: batch-adaptive variance-based weighting $w_j = \hat{p}_j(1-\hat{p}_j)$ with on-policy recomputation and normalization at every step;
(8) \textbf{\ourmethod{} ($\alpha{=}0.5$)}: our proposed method, batch-adaptive scale-consistent weighting $w_j = [\hat{p}_j(1-\hat{p}_j)]^{1/2}$ with on-policy recomputation and normalization at every step.
All other hyperparameters (learning rate, batch size, number of rollouts, EMA rate) are held constant across configurations to isolate the effect of the weighting scheme.
Details of implementation and hyperparameter are given in \cref{sec:appendix_hyperparameters}.

\subsection{Main Results}
\label{sec:main_results}

Tables~\ref{tab:merged_results16} and~\ref{tab:olmo_results16} summarize the main results on Qwen3-8B and OLMo-3-7B, respectively.

\paragraph{\ourmethod{} achieves the best overall performance.}
On Qwen3-8B, \ourmethod{} ($\alpha{=}0.5$) achieves the highest average accuracy on both mean@16 (74.8\%) and maj@16 (76.9\%), outperforming the SDPO baseline by +3.2 and +4.3 points, respectively. The improvement is consistent across tasks, with particularly large gains on Physics (+5.8/+6.4) and Biology (+5.0/+6.6), where the pass rate distribution offers the broadest intermediate zone for difficulty-aware weighting to exploit (Figure~\ref{fig:passrate_dynamics}).

\paragraph{Dense credit assignment is essential.}
Both GRPO variants substantially underperform all SDPO-based methods, confirming that SDPO's dense, token-level credit assignment provides a fundamental advantage over GRPO's uniform per-token signal. Notably, removing the advantage normalization (GRPO w/o Norm) further degrades performance, consistent with our theoretical analysis in \cref{sec:gap}: normalization equalizes learnability across questions and is a key component of GRPO's implicit difficulty awareness. These results underscore that \ourmethod{}'s gains come from combining both ingredients---dense credit and difficulty modulation---rather than from either alone.

\paragraph{Scale-consistent weighting outperforms variance-based weighting.}
Comparing the two exponents, $\alpha{=}0.5$ consistently outperforms $\alpha{=}1$ across all tasks, with an average gap of +3.5 on mean@16 and +5.2 on maj@16. This validates our theoretical argument 
(\cref{sec:gap}): under GRPO's normalized rewards, the leading-order learnability is equalized across questions, and the residual per-question gradient magnitude scales as $\sqrt{p(1-p)}$; $\alpha{=}1$ over-suppresses this signal by an additional factor of $\sqrt{p(1-p)} \leq 0.5$.(Figure~\ref{fig:combined}, Right).

\paragraph{Static weighting and hard filtering underperform.}
PACED, which applies variance-based weights ($\alpha{=}1$) frozen at initialization, is the weakest method---falling 3.4/3.1 points below the unweighted SDPO baseline on average. This degradation is attributable to both the squared-scale suppression of $\alpha{=}1$ and the staleness of frozen weights, which become increasingly misaligned with the model's evolving competence (\cref{app:passrate_analysis}). Hard filtering (retaining $0.2 \leq \hat{p} \leq 0.8$) performs closer to SDPO but still trails \ourmethod{} by 4.1/5.0 points on average, confirming that soft, continuous weighting extracts more signal than binary thresholding.

\paragraph{Comparison with SRPO.}
SRPO achieves competitive results, ranking second on maj@16 (74.5\%) and obtaining the best scores on Tool Use (71.8/73.3) and Materials maj@16 (80.3). However, SRPO's sample-level routing mechanism introduces substantially higher gradient variance (Figure~\ref{fig:combined}, Right) and training instability compared to \ourmethod{}, which achieves a higher average with smoother optimization dynamics. Moreover, SRPO's gains on Tool Use---where pass rates are concentrated at the extremes (\cref{app:passrate_analysis})---suggest that its GRPO branch provides complementary signal in regimes where difficulty-aware weighting has limited leverage.

\paragraph{Cross-model generalization.}
On OLMo-3-7B (Table~\ref{tab:olmo_results16}), \ourmethod{} ($\alpha{=}0.5$) improves over SDPO by +1.8/+3.0 on mean@16/maj@16, with gains across all five tasks. This confirms that the benefit of pass-rate weighting is not specific to the Qwen model family.

\paragraph{Response length.}
\label{response_length_analysis}
\begin{figure}[H]
    \centering
    \includegraphics[width=0.45\linewidth]{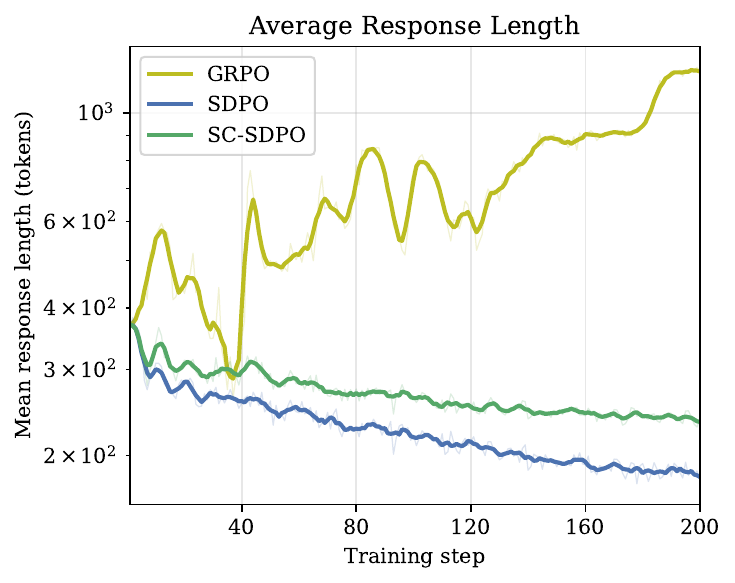}
    \caption{
        Average response length (in tokens) over 200 training steps, averaged across five datasets.
    }
    \label{fig:response_length}
\end{figure}

Figure~\ref{fig:response_length} shows that \ourmethod{} and SDPO exhibit a declining trend in response length as training progresses, consistent with prior findings that on-policy RL encourages more concise reasoning~\citep{hubotter2026reinforcement}. Notably, \ourmethod{} maintains consistently higher response lengths than the SDPO baseline throughout training, suggesting that pass-rate weighting steers the model toward more elaborate reasoning on intermediate-difficulty questions rather than collapsing to short, confident answers on already-mastered ones.

\begin{figure}[H]
    \centering
    \includegraphics[width=\linewidth]{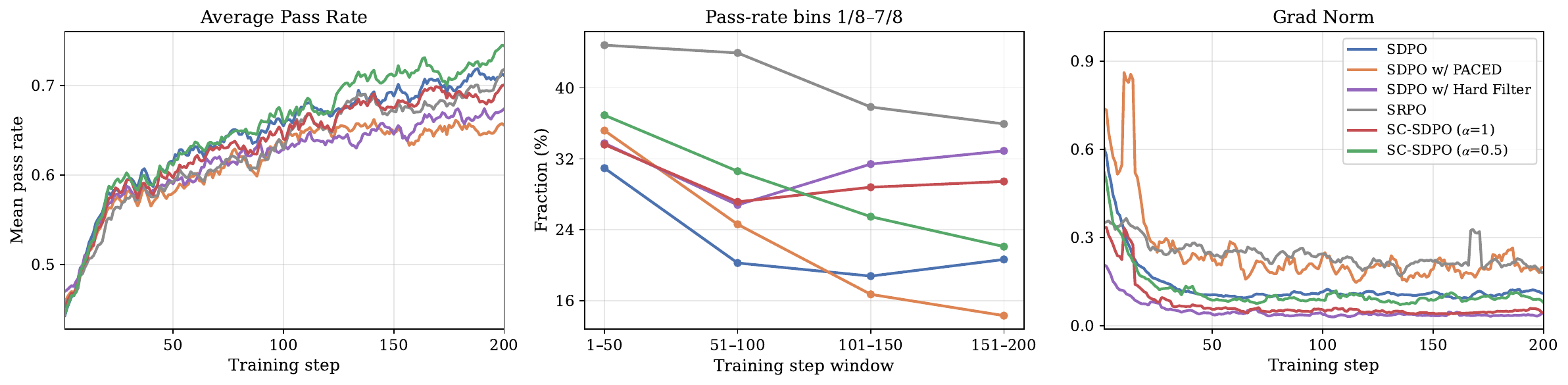}
    \caption{
Averaged training dynamics across five datasets. \textbf{(Left)} Mean pass rate per training step.
  \textbf{(Middle)} Fraction of training samples in pass-rate bins $\frac{1}{8}$--$\frac{7}{8}$ per 50-step
  window. \textbf{(Right)} Actor gradient norm per training step.
    }
    \label{fig:combined}
\end{figure}
\subsection{Training Dynamics Analysis}
\label{sec:training_dynamics}
Figure~\ref{fig:combined} presents three complementary views of the training dynamics.
Per-dataset breakdowns and extended analysis are provided in \cref{app:passrate_analysis}.

\paragraph{Pass rate progression (Left).} All methods exhibit rising mean pass rates as training progresses, reflecting the model's growing competence. \ourmethod{} ($\alpha{=}0.5$) achieves the highest mean pass rate in the later stages of training, indicating that concentrating gradient budget on the sweet spot accelerates overall skill acquisition. ToolUse is a notable exception where pass rates remain low across all methods due to the polarized difficulty distribution (cf.\ Figure~\ref{fig:passrate_dynamics} in the Appendix).

\paragraph{Sweet spot retention (Middle).} SRPO maintains the highest fraction of intermediate-pass-rate samples throughout training, which is expected given its routing design that preserves mixed-outcome groups. However, as shown in Table~\ref{tab:merged_results16}, this does not translate into the best average accuracy, suggesting that a large sweet spot is necessary but not sufficient---optimization stability and signal quality matter equally. Among non-routing methods, \ourmethod{} ($\alpha{=}0.5$) retains the highest intermediate fraction in early training (steps 1--50), when the learning frontier is broadest and difficulty-aware weighting has the greatest impact. As more questions are mastered, the fraction naturally declines across all methods; \ourmethod{} converges smoothly, consistent with the implicit curriculum effect of our batch-adaptive design.

\paragraph{Gradient stability (Right).} \ourmethod{} ($\alpha{=}0.5$) maintains a gradient norm profile closely aligned with the unweighted SDPO baseline, confirming that scale-consistent weighting reshapes the gradient distribution across questions without disrupting the overall optimization scale. This stability follows directly from the batch-adaptive normalization (Eq.~\ref{eq:normalization}), which preserves unit-mean weights. In contrast, PACED produces large gradient spikes in early training due to two compounding factors: its global normalization allows batch-level weight sums to fluctuate, and its frozen weights become misaligned with the model's evolving competence. SRPO exhibits persistently unstable gradients, suggesting that its sample-level routing mechanism introduces inconsistent gradient directions across steps. The variance-based weighting ($\alpha{=}1$) drives the gradient norm well below the baseline, starving the optimizer of learning signal, while the hard filter yields similarly suppressed gradients due to its reduced effective batch size.

\section{Conclusion}

We identified a fundamental gap between SDPO and GRPO: SDPO provides dense token-level credit assignment but lacks question-level difficulty awareness, while GRPO possesses the opposite strengths. By extending the learnability analysis of \citet{bae2026online} to GRPO's normalized rewards, we showed that advantage normalization absorbs the $p(1-p)$ factor from the raw-reward bound, leaving $\sqrt{p(1-p)}$ as the natural per-question gradient scaling. This analysis directly prescribed \ourmethod{}, which restores difficulty awareness to SDPO through a single scalar weight per question. Experiments on scientific reasoning and tool-use tasks confirmed that \ourmethod{} consistently outperforms SDPO and all baselines, with $\alpha = 1/2$ validating the scale-consistent scaling predicted by our theory.

\section{Limitations}

\paragraph{Dependence on pass-rate granularity.}
With $G = 8$ rollouts per question, the empirical pass rate $\hat{p}$ takes only nine discrete values $\{0, \tfrac{1}{8}, \ldots, 1\}$. The Beta-type weight $[\hat{p}(1-\hat{p})]^{1/2}$ smooths over this discretization, but fine-grained difficulty distinctions may be lost. Increasing $G$ improves resolution at the cost of additional generation compute.

\paragraph{Binary reward assumption.}
Our theoretical justification for $\alpha = 1/2$ relies on the Bernoulli structure of binary rewards. In domains with continuous or partial-credit rewards, the relationship between reward variance and gradient signal strength may differ, and the optimal exponent could change accordingly.

\paragraph{Task scope.}
We evaluate on scientific reasoning and tool-use tasks with verifiable answers. Extending pass-rate weighting to open-ended generation tasks (e.g., creative writing, summarization) would require a proxy competence signal such as reward-model scores, introducing additional noise and calibration challenges.

\paragraph{Model scale.}
Our experiments are conducted on 7--8B parameter models. While the method is architecture-agnostic, its effectiveness on substantially larger or smaller models remains to be validated. Following the scaling findings of SDPO~\citep{hubotter2026reinforcement}, we expect the benefit to grow with model scale, as stronger in-context learners produce more reliable self-teacher signals, but this hypothesis requires empirical confirmation.

\paragraph{Cross-model ablation coverage.}
Our full ablation suite is conducted only on Qwen3-8B.
OLMo-3-7B is limited to the core SDPO
vs.\ \ourmethod{} comparison, which suffices to
confirm cross-family generalization of the
proposed weighting but leaves open whether the
relative ranking among baselines transfers
across architectures. Extending the complete
ablation to additional model families is a natural direction for future work.
\bibliographystyle{unsrtnat}
\bibliography{custom}
\clearpage
\appendix

\section{Appendix}
\label{sec:appendix}

\subsection{Dataset Details}
\label{sec:appendix_dataset}
We use the same train/test splits provided by \citet{hubotter2026reinforcement}, summarized in Table~\ref{tab:dataset_stats}.

\begin{table}[H]
\small
\centering
\begin{tabular}{llrrr}
\toprule
Dataset & Source & Train & Test & Total \\
\midrule
Chemistry & SciKnowEval & 1,890 & 210 & 2,100 \\
Physics & SciKnowEval & 720 & 80 & 800 \\
Biology & SciKnowEval & 450 & 50 & 500 \\
Materials & SciKnowEval & 841 & 94 & 935 \\
Tool Use & ToolAlpaca & 4,046 & 68 & 4,114 \\
\bottomrule
\end{tabular}
\caption{Dataset statistics.}
\label{tab:dataset_stats}
\end{table}

\subsection{Hyperparameters}
\label{sec:appendix_hyperparameters}

We summarize hyperparameters in \cref{tab:hyperparameters}.

\begin{table}[H]
\small
% \renewcommand{\arraystretch}{1.2}
% \resizebox{\textwidth}{!}{
\centering
\setlength{\tabcolsep}{3.5pt}
\begin{tabular}{llll}
    \toprule
    \textbf{Parameters} &    \\
    \midrule
    \textbf{General} &  \\
    Model & Qwen/Qwen3-8B   \\
     & allenai/Olmo3-7B-Instruct  \\
    Thinking & False \\
    \midrule
    \textbf{Data} & \\
    Max. prompt length & 2048  \\
    Max. response length  & 2048 \\
    & 8192 (GRPO) \\

    \midrule
    \textbf{Batching}  & \\
    Question batch size  & 32  \\
    Mini batch size & 32  \\
    Number of rollouts & 8 \\
    \midrule
    \textbf{Rollout}  & \\
    Inference engine & vllm   \\
    Temperature  & 1.0   \\

    \midrule
    \textbf{Validation}  &   \\
    Number of rollouts & 16 \\
    Temperature  & 0.6  \\
    Top-$p$  & 0.95   \\
    \midrule
    \textbf{Loss}  & \\
    Top-$K$ distillation  & 100 \\
    Distillation divergence & Jensen–Shannon \\
    Teacher-EMA update rate & 0.05 \\
    $\epsilon$-high & 0.28 (GRPO)\\
    KL coefficient ($\lambda$) & 0 (GRPO) \\
    Normalize advantages  & True (GRPO)\\
     & False (GRPO w/o Norm)\\
    % Rollout importance sampling clip & 2 \\
    \midrule
    \textbf{Training}  & \\
    Optimizer  & AdamW \\
    Learning rate  & $1 \times 10^{-5}$ (constant)  \\
    Warmup steps & 10  \\
    Weight decay  & 0.01  \\
    Gradient Clip Norm & 1.0   \\
    \bottomrule
\end{tabular}
% }
\caption{Hyperparameters.}
\label{tab:hyperparameters}
\end{table}

\subsection{Per-Token Advantage under Jensen--Shannon Divergence}
\label{app:jsd_advantage}

Let $\pi_S := \pi_\theta(\cdot \mid x, y_{i,<t})$ and
$\pi_T := \pi_\theta(\cdot \mid x, f_i, y_{i,<t})$ denote the student and teacher
distributions, and $M := \frac{1}{2}(\pi_S + \pi_T)$ their mixture. The
Jensen--Shannon divergence is
\begin{equation}
    D_{\mathrm{JSD}}(\pi_S \| \pi_T)
    = \frac{1}{2} D_{\mathrm{KL}}(\pi_S \| M)
    + \frac{1}{2} D_{\mathrm{KL}}(\pi_T \| M).
\end{equation}
Taking the gradient with respect to $\theta$ (with stop-gradient on $\pi_T$),
the mixture $M=\tfrac{1}{2}(\pi_S+\pi_T)$ depends on $\pi_S$, so both KL terms
contribute. 

To derive the gradient, we first expand the divergence as a sum over the vocabulary $y$ (omitting the conditioning variables for brevity):
\begin{equation}
    D_{\mathrm{JSD}} = \sum_y \left[ \frac{1}{2} \pi_S(y) \log \frac{\pi_S(y)}{M(y)} + \frac{1}{2} \pi_T(y) \log \frac{\pi_T(y)}{M(y)} \right].
\end{equation}

 \begin{figure*}[t]
      \centering
      \includegraphics[width=\linewidth]{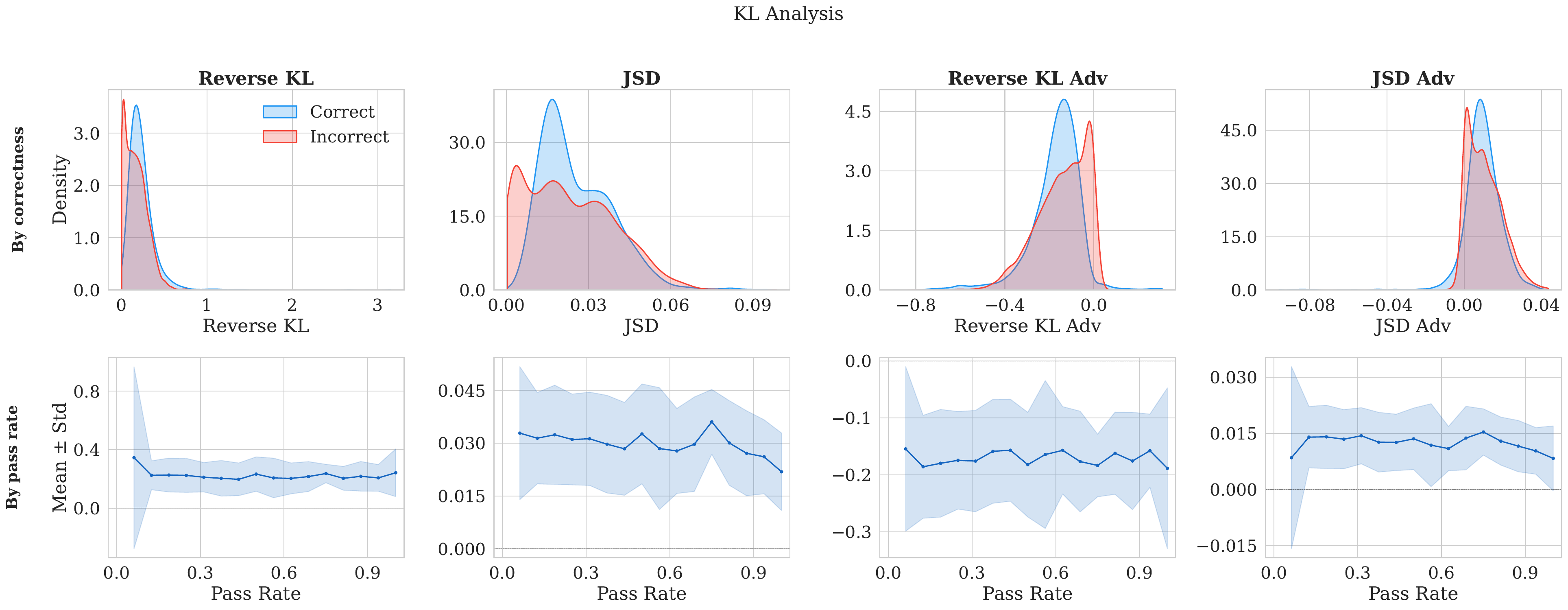}
      \caption{
          KL-based analysis of the base model (\texttt{Qwen3-8B}).
          \textbf{Top row}: kernel density estimates of each metric, stratified by whether the individual rollout is correct (blue) or incorrect (red).
          \textbf{Bottom row}: per-metric mean $\pm$ one standard deviation as a function of the question-level pass rate
          (fraction of correct rollouts among 16, ranging from 0.0625 to 1).
      }
      \label{fig:kl_analysis}
  \end{figure*}
  
We compute the partial derivative of the divergence with respect to the student probability $p = \pi_S(y)$. Let $q = \pi_T(y)$, and the mixture distribution be $m = M(y) = \frac{p+q}{2}$. Applying the product and chain rules, and noting that $\frac{\partial m}{\partial p} = \frac{1}{2}$, we differentiate the terms inside the summation:
\begin{equation}
    \frac{\partial}{\partial p} \left( \frac{1}{2} p \log \frac{p}{m} + \frac{1}{2} q \log \frac{q}{m} \right).
\end{equation}

Evaluating the derivative term by term yields:
\begin{enumerate}
    \item $\frac{\partial}{\partial p} \left( \frac{1}{2} p \log p \right) = \frac{1}{2} (\log p + 1)$
    \item $\frac{\partial}{\partial p} \left( -\frac{1}{2} p \log m \right) = -\frac{1}{2} \log m - \frac{p}{4m}$
    \item $\frac{\partial}{\partial p} \left( \frac{1}{2} q \log q \right) = 0$
    \item $\frac{\partial}{\partial p} \left( -\frac{1}{2} q \log m \right) = -\frac{q}{4m}$
\end{enumerate}

Summing these partial derivatives results in:
\begin{equation}
\begin{aligned}
        \frac{\partial D_{\mathrm{JSD}}}{\partial p} 
        &= \frac{1}{2} \log \frac{p}{m} + \frac{1}{2} - \frac{p + q}{4m}
        \\&= \frac{1}{2} \log \frac{p}{m} + \frac{1}{2} - \frac{2(p + q)}{4(p+q)}
        \\&= \frac{1}{2} \log \frac{p}{m} + \frac{1}{2} -  \frac{1}{2}
        \\&= \frac{1}{2} \log \frac{p}{m}.
\end{aligned}
\end{equation}
Thus:
\begin{equation}
\begin{aligned}
    \nabla_\theta D_{\mathrm{JSD}} &= \sum_y \frac{\partial D_{\mathrm{JSD}}}{\partial \pi_S(y)} \nabla_\theta \pi_S(y)
    \\&= \sum_y \frac{1}{2} \log \frac{\pi_S(y)}{M(y)} \nabla_\theta \pi_S(y)
\end{aligned}
\end{equation}

Using the log-derivative trick, $\nabla_\theta \pi_S(y) = \pi_S(y) \nabla_\theta \log \pi_S(y)$, we can rewrite the gradient as:
\begin{equation}
\begin{aligned}
    \nabla_\theta D_{\mathrm{JSD}} &= \sum_y \pi_S(y) \left[ \frac{1}{2} \log \frac{\pi_S(y)}{M(y)} \right] \nabla_\theta \log \pi_S(y) \\
    &= \mathbb{E}_{y \sim \pi_S} \left[ \frac{1}{2} \log \frac{\pi_S(y)}{M(y)} \nabla_\theta \log \pi_S(y) \right].
\end{aligned}
\end{equation}

The per-token advantage ($A(y)$) is given as:
\begin{equation}
    A_{\mathrm{JSD}}(y) = \frac{1}{2} \log \frac{\pi_S(y)}{M(y)}.
\end{equation}

\subsection{KL Analysis}
\label{app:kl_analysis}

To empirically verify that SDPO's distillation loss does not implicitly recover question-level signal modulation, we analyze the distribution of several KL-based metrics as a function of pass rate on the base model (Qwen3-8B).

We sample $n{=}16$ rollouts per question (temperature $0.6$, top-$p$ $0.95$). The teacher distribution is obtained via the SDPO reprompt template. All distributional metrics are computed over the student's top-100 vocabulary indices with an additional tail bucket. We consider four metrics: the reverse KL divergence $\mathrm{KL}(\pi_\theta \| \pi_t)$, the Jensen--Shannon divergence $\mathrm{JSD} = \frac{1}{2}\mathrm{KL}(\pi_\theta \| M) + \frac{1}{2}\mathrm{KL}(\pi_t \| M)$ with $M = \frac{1}{2}(\pi_\theta + \pi_t)$, and their token-level advantage counterparts: $\log\frac{\pi_t(\hat{y}_t)}{\pi_\theta(\hat{y}_t)}$ and $\frac{1}{2}\log\frac{\pi_\theta(\hat{y}_t)}{M(\hat{y}_t)}$, each averaged over response tokens.

As shown in Figure~\ref{fig:kl_analysis} (bottom row), all four metrics remain approximately flat as a function of pass rate. Note that questions with $\hat{p} = 0$ are excluded from this analysis, since the self-teacher requires a correct rollout as the reference demonstration and no such rollout exists when all attempts fail.
% None exhibits the bell-shaped profile characteristic of GRPO's implicit signal modulation, where gradient magnitude peaks at $\hat{p} \approx 0.5$.

\subsection{Learnability Bound under Normalized Rewards}
\label{sec:normalized_bound}

The reverse KL bound of \citet{bae2026online}
(Eq.~\ref{eq:reward_variance}) is derived under the raw
binary reward $r \in \{0,1\}$. Here we extend their CGF
framework (Proposition~6.1 in \citealt{bae2026online}) to
the standardized reward that GRPO's group-relative
normalization implicitly operates on.

\begin{proposition}
\label{prop:normalized_cgf}
Let $r \in \{0,1\}$ with $\mathbb{E}[r] = p \in (0,1)$. Define $\tilde{r} := (r - p)/\sqrt{p(1-p)}$ and the corresponding optimal policy
$\tilde{\pi}^* \propto \pi_{\mathrm{init}} \exp(\tilde{r}/\beta)$.
Then
\begin{equation}
    D_{\mathrm{KL}}\!\left(\pi_{\mathrm{init}} \,\|\,
    \tilde{\pi}^*\right)
    = \frac{1}{2\beta^2}
    + O(\beta^{-3}).
    \label{eq:normalized_kl_appendix}
\end{equation}
\end{proposition}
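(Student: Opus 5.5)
Write $\tilde{\pi}^*(y) = \pi_{\mathrm{init}}(y)\exp(\tilde r(y)/\beta)/Z(\beta)$ with $Z(\beta) = \mathbb{E}_{\pi_{\mathrm{init}}}[\exp(\tilde r/\beta)]$. The reverse KL then reduces to a cumulant generating function identity, exactly as in Proposition~6.1 of \citet{bae2026online}:
\begin{equation*}
D_{\mathrm{KL}}(\pi_{\mathrm{init}}\,\|\,\tilde{\pi}^*) = \mathbb{E}_{\pi_{\mathrm{init}}}\!\left[\log\frac{\pi_{\mathrm{init}}}{\tilde{\pi}^*}\right] = \log Z(\beta) - \frac{1}{\beta}\mathbb{E}[\tilde r] = K(1/\beta) - \frac{1}{\beta}\kappa_1 ,
\end{equation*}
where $K(s) = \log \mathbb{E}[e^{s\tilde r}]$ is the CGF of $\tilde r$ under $\pi_{\mathrm{init}}$ and $\kappa_1 = \mathbb{E}[\tilde r]$. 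The plan is to expand $K$ in cumulants, $K(s) = \sum_{n\ge1}\kappa_n s^n/n!$. The linear term cancels, leaving
\begin{equation*}
D_{\mathrm{KL}} = \frac{\kappa_2}{2\beta^2} + \frac{\kappa_3}{6\beta^3} + \cdots.
\end{equation*}
This is the same expansion that yields $p(1-p)/(2\beta^2)$ in the raw case, where $\kappa_2 = \mathrm{Var}[r]$.

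Next, compute the cumulants of the standardized reward. Since $\tilde r$ is an affine transform of a Bernoulli$(p)$ variable, we have $\kappa_1 = 0$ and $\kappa_2 = \mathrm{Var}[r]/(p(1-p)) = 1$. So the leading term is $1/(2\beta^2)$, independent of $p$, and this is precisely where the normalization absorbs the variance factor. The third cumulant is the skewness, $\kappa_3 = (1-2p)/\sqrt{p(1-p)}$, and in general $\kappa_n = \kappa_n(r)/(p(1-p))^{n/2}$.

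The main obstacle is making the remainder rigorous: we need to justify that all terms of order three and higher together form $O(\beta^{-3})$. I will do this without a series argument. Note that $\tilde r$ is bounded, taking the values $\sqrt{(1-p)/p}$ and $-\sqrt{p/(1-p)}$, so $K$ is real-analytic near $0$. Taylor's theorem with Lagrange remainder then gives
\begin{equation*}
K(s) = \frac{s^2}{2} + \frac{K'''(\xi)}{6}s^3, \qquad \xi \in (0,s).
\end{equation*}
Moreover, $K'''(\xi)$ is the third central moment of $\tilde r$ under the exponentially tilted law, so it is bounded in absolute value by $(\max|\tilde r|)^3$ times a constant. With $s = 1/\beta$, the remainder is $O(\beta^{-3})$, where the constant depends on $p$ through $\max|\tilde r|$. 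I will state explicitly that the constant blows up as $p \to 0$ or $p \to 1$, so the $O(\cdot)$ is for fixed $p \in (0,1)$ (uniform on compact subsets). This caveat matches the statement's restriction to non-degenerate $p$.
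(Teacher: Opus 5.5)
Your proof is correct and follows the same route as the paper. Both reduce the reverse KL to the CGF of $\tilde r$ evaluated at $1/\beta$, use $\kappa_1 = 0$ and $\kappa_2 = \mathrm{Var}[\tilde r] = 1$, and read off the leading term $1/(2\beta^2)$.

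The only difference is how the $O(\beta^{-3})$ remainder is handled. The paper simply truncates the cumulant series. You control it explicitly with the Lagrange form and the bound on the tilted third central moment, which gives a non-asymptotic bound with a $p$-dependent constant of order $(p(1-p))^{-3/2}$, so your version is, if anything, more careful.
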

\begin{proof}
By the CGF identity of \citet{bae2026online},
$D_{\mathrm{KL}}(\pi_{\mathrm{init}} \| \tilde{\pi}^*) = K_{\tilde{r}}(1/\beta)$,
where $K_Z(t) = \sum_{k=2}^{\infty} \kappa_k t^k / k!$ for a zero-mean variable~$Z$.
The standardized reward takes values $\sqrt{(1-p)/p}$ with probability~$p$ and $-\sqrt{p/(1-p)}$ with probability~$1-p$, giving:
\begin{equation}
\begin{split}
K_{\tilde{r}}\!\left(\tfrac{1}{\beta}\right)
&= \frac{\kappa_2}{2\beta^2}
+ O(\beta^{-3})
\\&= \frac{\mathrm{Var}[\tilde{r}]}{2\beta^2}
+ O(\beta^{-3})
\\&= \frac{p \cdot \frac{1-p}{p} + (1-p) \cdot \frac{p}{1-p}}{2\beta^2}
+ O(\beta^{-3})
\\&= \frac{1}{2\beta^2}
+ O(\beta^{-3}).
\end{split}
\end{equation}
\end{proof}
Comparing with the raw-reward bound $D_{\mathrm{KL}} \geq p(1-p)/(2\beta^2)$ of \citet{bae2026online}, the leading term under normalized rewards becomes $1/(2\beta^2)$---a constant independent of~$p$. This reveals that GRPO's group-relative normalization acts as an implicit difficulty-aware mechanism: by dividing the reward by $\sqrt{p(1-p)}$, it equalizes the leading-order learnability across all non-degenerate questions ($0 < p < 1$), absorbing the $p(1-p)$ factor that appeared in the raw-reward bound. Independently, the per-step gradient magnitude is exactly $\sqrt{p(1-p)}$ by the algebraic identity in Eq.~\ref{eq:sweet_spot_magnitude}, supporting $\alpha = 1/2$ as the scale-consistent exponent.

\begin{figure*}[p] 
    \centering
    
    \begin{subfigure}{\linewidth}
        \centering
        \includegraphics[width=0.95\linewidth]{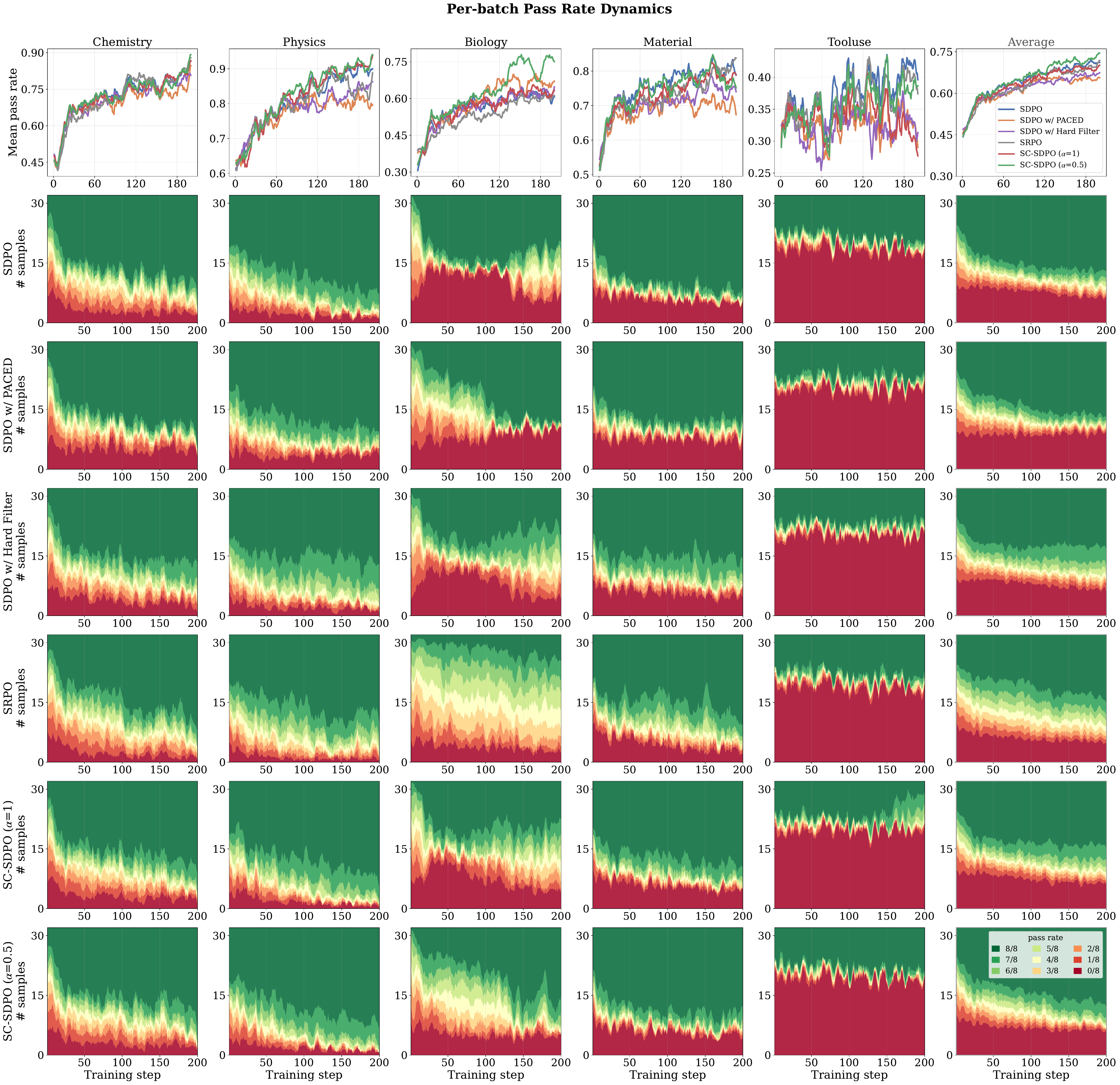}
        \caption{
            Per-batch pass rate dynamics over 200 training steps across five datasets and their average (rightmost column).
            \textbf{Row 1:} Mean pass rate per training batch for each method, smoothed with a sliding window.
            \textbf{Rows 2--7:} Stacked area charts showing the sample count distribution across nine discrete pass rate levels ($0, \tfrac{1}{8}, \ldots, 1$) at each step, color-coded from red (pass rate $= 0$, all rollouts incorrect) to green (pass rate $= 1$, all rollouts correct), smoothed with a sliding window.
        }
        \label{fig:passrate_dynamics}
    \end{subfigure}

    \vspace{1.5em}

    \begin{subfigure}{\linewidth}
        \centering
        \includegraphics[width=0.8\linewidth]{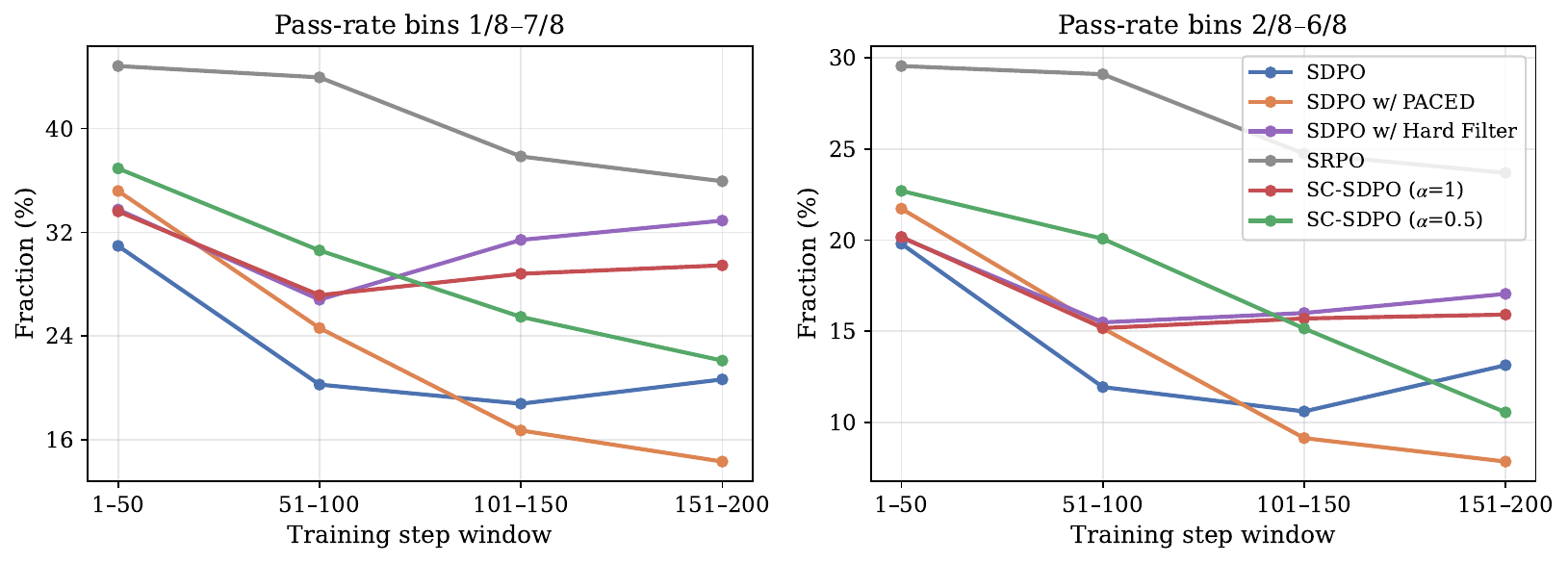}
        \caption{
            Fraction of training samples in pass-rate bins 1/8–7/8 (left) and bins 2/8–6/8 (right) per 50-step training window, averaged across five datasets. 
        }
        \label{fig:passrate_stats}
    \end{subfigure}

\end{figure*}

\subsection{Pass Rate Analysis}
\label{app:passrate_analysis}
Figure~\ref{fig:passrate_dynamics} tracks the per-batch pass rate distribution over 200 training steps. On most datasets, the model's growing competence gradually converts intractable questions ($\hat{p} = 0$, red) into solvable ones, expanding the green (mastered) region over time. On Biology, \ourmethod{} ($\alpha = 0.5$) maintains a visibly broader band of intermediate pass rates ($\hat{p} \in [1/8, 7/8]$) compared to the SDPO baseline, indicating a larger effective learnable zone throughout training. ToolUse presents a distinct pattern: pass rates for all methods remain heavily concentrated at the extremes ($\hat{p} \approx 0$ or $\hat{p} \approx 1$), leaving a narrow intermediate region and limited room for difficulty-aware weighting to differentiate methods.

Figure~\ref{fig:passrate_stats} quantifies the fraction of training samples in the intermediate pass-rate bins across 50-step windows. SRPO maintains the highest intermediate fraction throughout training in both the broader ($\hat{p} \in [1/8, 7/8]$) and narrower ($\hat{p} \in [2/8, 6/8]$) ranges. This is expected given SRPO's routing mechanism: by directing incorrect rollouts with correct siblings to the SDPO branch and all others to GRPO, it effectively preserves a large pool of mixed-outcome groups rather than resolving them toward mastery. However, as shown in the main results, this higher intermediate retention does not fully translate into superior accuracy, suggesting that sustaining a large sweet spot is necessary but not sufficient. Among non-routing methods, \ourmethod{} ($\alpha = 0.5$) achieves the highest intermediate fraction in the early phase of training (steps 1--100), where the learning frontier is broadest and difficulty-aware weighting has the greatest impact. As training progresses and more questions are mastered, the intermediate fraction naturally declines across all methods; \ourmethod{} converges smoothly, consistent with the implicit curriculum effect of our batch-adaptive design.

\subsection{Prompt Templates}
\label{app:prompt}

   We adopt the same prompt templates as \citet{hubotter2026reinforcement}.

  \paragraph{Science QA Prompt (SciKnowEval)}

  Science QA tasks use a fixed system prompt combined with a per-question user message.

  \begin{tcolorbox}[title=System Prompt, colback=gray!5]
  \texttt{Given a question and four options, please select the right answer.
  Respond in the following format:}

  \texttt{<reasoning>}\\
  \texttt{...}\\
  \texttt{</reasoning>}\\
  \texttt{<answer>}\\
  \texttt{...}\\
  \texttt{</answer>}

  \texttt{For the answer, only output the letter corresponding to the correct
  option (A, B, C, or D), and nothing else. Do not restate the answer text.
  For example, if the answer is ``A'', just output:}

  \texttt{<answer>}\\
  \texttt{A}\\
  \texttt{</answer>}
  \end{tcolorbox}

  \begin{tcolorbox}[title=User Message Template, colback=gray!5]
  \texttt{\{question\}}

  \texttt{A: \{choice\_A\}}\\
  \texttt{B: \{choice\_B\}}\\
  \texttt{C: \{choice\_C\}}\\
  \texttt{D: \{choice\_D\}}

  \texttt{Please reason step by step.}
  \end{tcolorbox}

  \begin{tcolorbox}[title=Teacher Context — With Solution, colback=blue!5]
  \textbf{[User]}

  \texttt{\{problem\}}

  \texttt{Please reason step by step, and put your final answer within \textbackslash boxed\{\}.}

  \texttt{Correct solution:}

  \texttt{\{successful\_previous\_attempt\}}

  \texttt{Correctly solve the original question.}

  \medskip
  \textbf{[Assistant]}

  \texttt{\{student\_response\ $\hat{y}$\}}
  \end{tcolorbox}
  
  \paragraph{Tool Use Prompt}

  Tool use tasks have no separate system prompt. The entire context is
  contained in a single user message structured as follows:

  \begin{tcolorbox}[title=Tool Use Prompt Template, colback=gray!5]
  \texttt{Your task is to answer the user's question using available tools.}\\
  \texttt{You have access to the following tools:}

  \texttt{Name: \{tool\_name\}}\\
  \texttt{Description: \{tool\_description\}}\\
  \texttt{Documentation:}\\
  \texttt{\{method\_name\}: \{method\_description\}}\\
  \texttt{Parameters: \{parameters\_json\}}\\
  \texttt{Output: \{output\_description\}}\\
  \texttt{[\ldots\ additional tools \ldots]}

  \medskip
  \texttt{Use the following format:}

  \texttt{Thought: you should always think about what to do}\\
  \texttt{Action: the action to take, should be one of the tool names.}\\
  \texttt{Action Input: the input to the action, must be in JSON format.}\\
  \texttt{All of the action input must be realistic and from the user.}

  \medskip
  \texttt{Begin!}\\
  \texttt{Question: \{user\_question\}}
  \end{tcolorbox}

  \noindent The expected response follows the \texttt{Thought / Action /
  Action Input} format. The verifier checks two conditions jointly:
  (1) the predicted action names match the ground-truth action names
  (multiset equality), and (2) the merged JSON parameters across all
  \texttt{Action Input} blocks exactly match the ground-truth parameters.
  Both conditions must hold for a reward of~$1$ (correct); otherwise the reward
  is~$0$ (incorrect).

  \noindent When SDPO self-distillation is applied to tool use, the
  feedback section in the teacher context is populated with
  a natural-language description of the mismatch, e.g.:

  \begin{tcolorbox}[title=Feedback Example (Tool Use), colback=orange!5]
  \texttt{Action inputs mismatch: predicted \{``method'': ``GET'', \ldots\},}\\
  \texttt{expected \{``method'': ``POST'', \ldots\}}
  \end{tcolorbox}

\end{document}